\documentclass[12pt]{article}
\usepackage{amssymb,amsthm,amsmath}
\usepackage{algorithm} 
\usepackage{algorithmic} 

\topmargin -12mm \pagestyle{plain} \headheight 5mm \oddsidemargin
2mm \textwidth 160mm \textheight 230mm

\newtheorem{lemma}{Lemma}[section]

\newtheorem{definition}[lemma]{Definition}

\newtheorem{theorem}[lemma]{Theorem}
\newtheorem{remark}[lemma]{Remark}

\begin{document}
\title{Evolution is Still Good: Theoretical Analysis of Evolutionary Algorithms on General Cover Problems}
\author{Yaoyao Zhang$^1$, Chaojie Zhu$^2$, Shaojie Tang$^3$, Ringli Ran$^2$, \\Ding-Zhu Du$^4$, Zhao Zhang$^2$\footnote{Corresponding author: Zhao Zhang, hxhzz@sina.com}\\
 {\small $^1$ College of Mathematics and System Science, Xinjiang
University}\\
 {\small Urumqi, Xinjiang, 830046, China}\\
 {\small $^2$ College of Mathematics and Computer Science, Zhejiang Normal University}\\
 {\small  Jinhua, Zhejiang, 321004, China}\\
 {\small $^3$ Naveen Jindal School of Management, University of Texas at Dallas}\\
 {\small Richardson, Texas, 75080, USA}\\
 {\small $^4$ Department of Computer Science, University of Texas at Dallas}\\
 {\small Richardson, Texas, 75080, USA}}
\date{}
\maketitle

\begin{abstract}
Theoretical studies on evolutionary algorithms have developed vigorously in recent years. Many such algorithms have theoretical guarantees in both running time and approximation ratio. Some approximation mechanism seems to be inherently embedded in many evolutionary algorithms. In this paper, we identify such a relation by proposing a unified analysis framework for a generalized simple multi-objective evolutionary algorithm (GSEMO), and apply it on a minimum weight general cover problem. For a wide range of problems (including the the minimum submodular cover problem in which the submodular function is real-valued, and the minimum connected dominating set problem for which the potential function is non-submodular), GSEMO yields asymptotically tight approximation ratios in expected polynomial time.

\vskip 0.2cm\noindent {\bf Keyword}: evolutionary algorithm; minimum weight general cover; minimum submodular cover; minimum connected dominating set; approximation ratio.
\end{abstract}


\section{Introduction}\label{sec1}

Evolutionary algorithms are heuristic search methods  inspired by biological evolution \cite{Vikhar}.
Although evolutionary algorithms have long been verified to be effective and efficient in empirical studies,  rigorous analyses about these algorithms did not emerge until the late 1990s. Considerable progress has been made in the theoretical understanding and analysis of evolutionary algorithms  \cite{DoerrBook,NeumannBook,Zhoubook}. In particular, it is interesting to see that evolutionary algorithms have good approximation guarantees as well as good running times for many NP-hard problems \cite{Friedrich,Friedrich1,Oliveto,Qian3,Qian4,Qian1,Qian2,Yu}. We notice that most of existing studies reply on the property of submodularity of their utility functions,  and those studies which do not require their utility functions to be submodular also depend on parameters measuring how far their utility functions are from submodularity. It remains largely open whether evolutionary algorithms can still achieve good approximation ratios in the absence of submodularity.  We also notice that previous studies on multi-objective evolutionary algorithms mainly focus on integral constraints. It is not clear whether real-valued constraints can be dealt with efficiently. In this paper, we aim to (partially) fill this gap by investigating the performance bounds of evolutionary algorithms for a broad class of minimum general cover problems, whose utility functions might be real-valued, and are not necessarily submodular. A formal definition of this class of general cover problems is as follows:

\begin{definition}[minimum general cover problem (MinGC)]\label{def:1}
{\rm Suppose $X=\{v_1,...,v_n\}$ is an element set, $w:X\mapsto \mathbb R^+$  is a \emph{weight (or cost) function} on $X$, and $g:2^X\mapsto \mathbb R^+$, called the \emph{utility function}, is a  real-valued set-monotone-nondecreasing function. The MinGC problem is to find a set $C\subseteq V$ satisfying
\begin{align}\label{eq0313-1}
\min_{C\subseteq X} & \ \sum_{x\in C}w(x)\\
\mbox{s.t.} & \  g(C)=g(X)  \nonumber
\end{align}}
\end{definition}

Note that MinGC is general enough to subsume many important problems, including the {\em minimum connected dominating set problem} (MinCDS) and the {\em  minimum submodular cover problem} (MinSubmC), as special cases.

In this paper, we develop a new general purpose analytical framework, called {\em multi-phase bin-tracking analysis} (MultiBinTrack), to derive a performance bound that does not rely on the approximate-submodularity for non-submodular utility functions. We develop this technique progressively, explaining the rationale behind the design of each component, and apply this technique to analyze the performance bound of {\em global simple evolutionary multi-objective optimizer} (GSEMO) \cite{Giel0}, a simple and classic evolutionary algorithm, for the MinGC problem. The basic idea of this technique  is to build a connection between an evolutionary algorithm and a greedy algorithm, which select items recursively based on their marginal utility-to-cost ratios.

It is well known that greedy strategy often has very good performance bounds for many coverage problems. A major contribution of this paper is to  derive sufficient conditions under which GSEMO can achieve nearly the same approximation ratio as the greedy algorithm. Both MinCDS and integer-valued MinSubmC satisfy these conditions, hence GSEMO yields bounded approximation ratios  for both problems in expected polynomial time.  Furthermore, our framework gives a bi-criterion approximation algorithm for the {\em real-valued} MinSubmC problem violating the feasibility constraint by a small additive factor.
It should be clarified that our main contribution is on developing a new technique to analyze GSEMO (which is an existing general purpose evolutionary algorithm) for the MinGC problem, rather than inventing new algorithms, with an attempt to reveal deeper approximation mechanism underlying this evolutionary algorithm.

\subsection{Related Works}\label{sec1.1}

Since the end of the last century, considerable progress has been made in understanding the theoretical performance of evolutionary algorithms \cite{DoerrBook,NeumannBook,Zhoubook}. In the following, we only examine performance bounds of evolutionary algorithms for those most closely related coverage problems.

One classic example of such problems is the {\em minimum set cover} (MinSC) problem. Friedrich {\it et al.} \cite{Friedrich} showed that an approximation ratio of $(\ln n+1)$ can be achieved by a {\em global simple evolutionary multi-objective optimizer} (GSEMO) in expected time $O (n^2m + mn(\log m + \log c_{max}))$, where $n$ is the number of elements to be covered, $m$ is the number of sets, and $c_{max}$ is the maximum cost of a set. For the $k$-MinSC problem, in which every set has size at most $k$, \cite{Yu} introduced a framework of evolutionary algorithm which yields an approximation ratio that can be achieved by a centralized approximation algorithm developed in \cite{Levin}. The {\em minimum vertex cover problem} (MinVC) is a special case of the MinSC problem, which has been a focus of many theoretical studies on evolutionary algorithms, including running time versus approximation ratio \cite{Friedrich,Oliveto}, FPT algorithms \cite{Gao,Kratsch,Pourhassan}, and in the dynamic setting \cite{Pourhassan2,Pourhassan1,ShiF}.

Coverage function is a special submodular function. Various submodular optimization problems, due to their wide applications in artificial intelligence, have recently attracted a lot of attention from researchers studying theoretical aspects of evolutionary algorithms,  especially on submodular maximization under a cardinality constraint \cite{Qian3,Qian4,Qian2}. Friedrich and Neumann further studied the submodular maximization under matroid constraints \cite{Friedrich1}.

There had been several attempts to apply evolutionary algorithms to non-submodular optimization problems \cite{Qian3,Qian4,Qian2}. They mostly focus on the (utility) maximization problem rather than the (cost) minimization problem as studied in this paper. Moreover, they often use a parameter called \emph{submodularity ratio},  or similar concepts which may be called \emph{approximate-submodularity}, to bound the distance of a non-submodular function to a submodular function. As a result, their performance bounds depend on the value of the approximate-submodularity.

Note that the approximate-submodularity of the utility functions of many MinGC problems such as the MinCDS problem could be arbitrarily large, making existing solutions ineffective for the MinGC problem. Furthermore, in the above submodular or non-submodular optimization problems, the constraints are integer-valued. It is not clear whether good approximation ratios can be achieved by evolutionary algorithms when the constraints are real-valued, such as the real-valued MinSubmC problem.

 The remaining part of this paper is organized as follows. In Section \ref{sec3}, we give an overview of  the technique of multi-phase bin-tracking analysis for GSEMO. In Section \ref{sec2}, we apply this technique to analyze the performance of GSEMO on the MinGC problem, and further apply the results to two special MinGC problems: the integer-valued minimum general cover problem (which includes the MinCDS problem) and the real-valued MinSubmC problem. Section \ref{sec4} concludes the paper and discusses future work.

\section{Overview of Algorithm Design and Analysis}\label{sec3}

In this section, we give a brief introduction to GSEMO, which is a classic general-purpose evolutionary algorithm. Then we give an overview of the technique of multi-phase bin-tracking analysis  (MultiBinTrack).

\subsection{Overview of  GSEMO}\label{sec3.1}

We first introduce some notations. A subset $S\subseteq X=\{v_1,\ldots,v_n\}$ can be identified with its {\em characteristic vector} $\textbf{x}\in \{0,1\}^n$, in which the $i$-th bit $x_i=1$ if and only if $v_i\in S$. In the following, we do not distinguish a vector ${\bf x}$ and the set it represents, and use terminology {\em individual} to refer to them. Consider a minimization problem with a bi-objective function $(f_1({\bf x}), f_2({\bf x}))$, 
we say an individual $\textbf{x}^\prime$  {\em weakly dominates} $\textbf{x}$, denoted as $\textbf{x}^\prime\succeq \textbf{x}$, if $f_i(\textbf{x}^\prime)\leq f_i(\textbf{x})$ holds for any $i\in\{1, 2\}$. In this case, we also say that  ${\bf x}'$ is {\em weakly better} than ${\bf x}$, or ${\bf x}$ is {\em weakly inferior} to ${\bf x}'$. We say that $\textbf{x}^\prime$  {\em dominates} $\textbf{x}$, denoted as $\textbf{x}^\prime\succ\textbf{x}$, if $\textbf{x}^\prime\succeq \textbf{x}$ and there exists an $i\in\{1, 2\}$ with $f_i(\textbf{x}^\prime)<f_i(\textbf{x})$. In this case, we also say that ${\bf x}'$ is {\em better than} ${\bf x}$, or ${\bf x}$ is {\em inferior} to ${\bf x}'$. If neither $\textbf{x}^\prime\succeq \textbf{x}$ nor $\textbf{x}\succeq \textbf{x}^\prime$, then $\textbf{x}$ and $\textbf{x}^\prime$ are {\em incomparable}. 

A typical multi-objective evolutionary algorithm maintains a \emph{population} $P$, which is composed of a set of individuals that are mutually incomparable, i.e., $\forall\: \textbf{x}\in P$, the set $\{\textbf{x}^\prime\in P, \textbf{x}^\prime\succeq \textbf{x}, \textbf{x}^\prime\neq \textbf{x}\}=\emptyset$. 
It starts with some initial population $P=\{{\bf x}_0\}$. In each iteration, an individual ${\bf x}$ is picked uniformly at random from $P$, and mutated into an offspring ${\bf x}'$.  If ${\bf x}'$ is not inferior to any individual in $P$, then ${\bf x}'$ is added into $P$ and those individuals which are weakly inferior to ${\bf x}'$ are deleted from $P$. There are various ways of performing the mutation. In GSEMO \cite{Giel0}, which is the focus of this paper, it flips every bit of ${\bf x}$ independently with probability $1/n$.

As discussed earlier, our main focus is on developing a novel technique to theoretically analyze GSEMO for the MinGC problem rather than inventing new algorithms.

\subsection{Multi-Phase  Bin-Tracking Analysis}\label{sec3.2}

In this section, we give an overview of MultiBinTrack.
Consider a problem of minimizing a bi-objective function $(f_1({\bf x}),f_2({\bf x}))$ such that $f_1(\cdot)$ is an utility function which measures feasibility, and $f_2(\cdot)$ is a cost function. Suppose $f_1({\bf x})$ takes values from a discrete set $\{\xi_0,\xi_1,\ldots,\xi_{\beta}\}$, where $\xi_0<\xi_1<\cdots<\xi_{\beta}$, and ${\bf x}$ is a feasible solution if and only if $f_1({\bf x})=\xi_0$. Our goal is to find a feasible solution to minimize $f_2({\bf x})$. 

As GSEMO progresses, we maintain a group of $\beta+1$ bins $B_{\xi_0}, B_{\xi_1},\ldots, B_{\xi_{\beta}}$, each of which is empty initially.
By abusing notation a little without ambiguity, for each  $i\in\{0,1,\ldots,\beta\}$, we use the same notation  $B_{\xi_i}$  to refer to the $i$-th bin as well as the set of individuals contained in that bin.  It is important to clarify that this bin system is created only for the purpose of analysis, the implementation of GSEMO does not rely on this system. Once a new individual ${\bf x}'$ is generated and inserted into $P$ by GSEMO,  we add ${\bf x}'$ to $B_{f_1({\bf x}')}$ and delete all individuals that are weakly inferior to  ${\bf x}'$ from the bin system if and only if
${\bf x}'$ satisfies some {\em quality control condition} $\pi$.
Note that any individual ${\bf x}$ in $B_{\xi_i}$ has $f_1({\bf x})=\xi_i$,  and $B_{\xi_0}\neq\emptyset$ implies that a feasible solution has been reached.   However, a feasible solution might not be good. To ensure that an individual that can be put into $B_{\xi_0}$ has a good quality, the key is to find appropriate conditions $\pi$ to restrict those individuals that can enter the bin system.

To bound the running time, we introduce a \emph{tracker} $I$, which tracks the smallest index of the non-empty bin, i.e.,  $I=\min\{i\in \{0,1,\ldots,\beta\}\colon B_{\xi_i}\neq \emptyset\}$.  The analysis of time complexity involves two factors:

$(a)$ prove that $I$ does not increase with more individuals added into the bin system;

$(b)$ starting from an arbitrary stage of GSEMO with $I=i$, estimate the expected time for $I$ to decrease by at least 1, denote this expected time as $l_i$.

\noindent Then the expected time it takes for $I$ to reach $0$, which indicates that we have successfully found a feasible solution in $B_{\xi_0}$, is at most $\sum_{i=1}^{\beta}l_i$.


It turns out that the above framework of analysis is general enough to subsume the analysis used in many existing studies, including  the maximum matroid base problem \cite{Qian1,Reichel}, the minimum set cover problem \cite{Friedrich,Yu}, the maximum submodular optimization problem \cite{Friedrich1,Qian3,Qian4}, as special cases. For example, consider the {\em minimum set cover problem} (MinSC). Given a set of $n$ elements $E$ and a collection of $m$ subsets $\mathcal S\subseteq 2^E$, each set $S\in\mathcal S$ has a positive cost $c(S)$, the goal of MinSC is to select a minimum cost subcollection $\mathcal F\subseteq \mathcal S$ to cover all elements, i.e. $\bigcup_{S\in\mathcal F}S=E$ and the cost $c(\mathcal F)=\sum_{S\in\mathcal F}c(S)$ is the minimum. The analysis in \cite{Friedrich} for the $H_n$-approximate evolutionary algorithm for MinSC (where $H_n=\sum_{i=1}^n1/i$ is the $n$-th Harmonic number) can be restated using the above framework as follows. Let $f_1({\bf x})$ be the number of uncovered elements under ${\bf x}$ and $f_2({\bf x})=c({\bf x})$.
 Then $f_1({\bf x})$ can only take discrete values $0,1,\ldots,n$, and $f_1({\bf x})=0$ indicates that ${\bf x}$ corresponds to a set cover.  For each $i\in\{0, 1,\ldots,n\}$ and any individual ${\bf x}$ with $f_1({\bf x})=i$, we say that ${\bf x}$ satisfies condition $\pi$ if and only if $c({\bf x})\leq (H_n-H_{i})opt$, where $opt$ is the optimal value (note that the optimal value $opt$ is only used for the purpose of analysis). This indicates that every individual from $B_0$ is an $H_n$-approximate set cover. It can be shown that the tracker $I$ is monotone non-increasing, and the expected time it takes for $I$ to decrease by at least one is upper bounded by $O(mn)$, hence, the expected time to find an $H_n$-approximate solution is $O(n^2m)$, given that the starting population is $\emptyset$.

Unfortunately, when applying the above bin-tracking analysis to the  MinGC problem, which subsumes the   MinCDS problem and the real-valued MinSubmC problem  as special cases, we encounter additional challenges.

For the   MinCDS problem,   we found that the aforementioned bin-tracking analysis only works when $f_1({\bf x})$ is relatively large,   that is, when ${\bf x}$ is relatively far from feasible.  This motivates us to extend the  bin-tracking analysis to  {\em multi-phase}  bin-tracking analysis. In the analysis, we conduct the bin-tracking analysis in multiple phases, and in each phase, we adopt a different quality-control condition. This enables us to handle the case when $f_1({\bf x})$ is small. One challenge to be conquered in this multi-phase analysis is how to concatenate different phases in a smooth manner. We introduce the concept of ``advance'' to ensure quality control conditions and a smooth concatenation of different phases. These problems will be elaborated
in Section \ref{sec3.3}, where we apply this technique to analyze the performance of GSEMO on the MinGC problem.

Additional efforts are required to deal with the {\em real-valued} MinSubmC problem, as the feasibility function takes values from a {\em continuous} range. 


\section{Solving MinGC}\label{sec2}

In this section, we apply MultiBinTrack to analyze the performance of GSEMO for the MinGC problem (Definition \ref{def:1}). It is assumed that $g(\cdot)$ is real-valued, normalized ($g(\emptyset)=0$) and monotone nondecreasing, but is not necessarily submodular.

In Section \ref{sec0626-1}, we design a greedy algorithm for the MinGC problem and give sufficient conditions under which this algorithm can achieve a theoretically guaranteed approximation ratio. Then in Section \ref{sec3.3}, we show how to use MultiBinTrack to analyze the performance of GSEMO on MinGC.   In Section \ref{sec0308-1}, we apply the results to some special cases of the MinGC problem, showing that GSEMO can achieve almost the same approximation ratios as that of greedy algorithms in expected polynomial time.

\subsection{Greedy Algorithm}\label{sec0626-1}

In this section, we present a greedy algorithm {\sc Greedy} for the MinGC problem.  The algorithm uses a greedy strategy, choosing a most cost-effective element in each iteration. What is different from the other works is the sufficient conditions that we formulate for the algorithm to work for the MinGC problem with a theoretically guaranteed approximation ratio. For two subsets $S, S'\subseteq X$, let $\Delta_Sg(S')=g(S\cup S')-g(S')$ be the {\em marginal profit} of $S$ over $S'$.  {\sc Greedy} starts with an initial solution  $C=\emptyset$. In each subsequent iteration $t$, {\sc Greedy} adds to $C$ an element $b$ satisfying
\[b=\arg\max_v \{\Delta_vg(C)/w(v): v\in X\setminus C\},\]
where $\Delta_vg(C)/w(v)$ is called the {\em cost-effectiveness }of element $v$, w.r.t. $C$.
This process iterates until a feasible solution is reached. A detailed implementation of {\sc Greedy} is described in  Algorithm \ref{algo1}.

\begin{algorithm} [H]
\caption{{\sc Greedy}}

\begin{algorithmic}[1]\label{algo1}
\STATE \textbf{Input:} A MinGC instance $(X,w,g)$.
\STATE \textbf{Output:} A subset $C\subseteq X$ which is a feasible solution to MinGC.
\STATE $C\leftarrow \emptyset$
\WHILE{  $g(C)<g(X)$}
    \STATE $b\leftarrow \arg\max_v \{\Delta_vg(C)/w(v): v\in X\setminus C\}$  \label{line4}
    \STATE $C\leftarrow C\cup \{b\}$
\ENDWHILE
\RETURN $C$
\end{algorithmic}
\end{algorithm}

Assume, without loss of generality, that the  weight of the cheapest element is $1$, and let $w_{\max}=\max_{v\in X}w(v)$ denote the weight of the most expensive element.  The following parameter $\delta$ will be used in analyzing the approximation ratio:

\begin{equation}\label{eq0810-1}
\delta=\min_{C\subset X,v\in X\setminus C,g(C\cup\{v\})>g(C)}\{g(C\cup\{v\})-g(C)\}.
\end{equation}
Intuitively, $\delta$ measures the {\em degree of sparsity}, that is, the smallest gap between two distinct values of $g$. We always use $C^*$ to denote an optimal solution and let $opt=w(C^*)$.

\begin{theorem}\label{thm1}
Suppose a MinGC instance has the following properties:
\begin{itemize}
\item[$(\romannumeral1)$] for any element set $C\subset X$ with $g(C)<g(X)$, there is an element $v\in X\setminus C$ such that $\Delta_vg(C)>0$, and
\item[$(\romannumeral2)$] there exists a constant $p$  such that for any $C\subset X$,  the elements in $C^*\setminus C$ can be ordered as $v_1,\ldots v_t,v_{t+1},\ldots, v_{\widehat{t}} $, where $t$ is the smallest index satisfying $g(C_t^*\cup C)= g(X)$, here $C^*_{i}=\{v_1,\ldots,v_i\}$ and $C^*_{0}=\emptyset$, and for all $i=1,\dots, t$,
\begin{align}\label{eq0728-0}
\Delta_{v_{i}}g(C^*_{i-1}\cup C)\leq \Delta_{v_{i}}g(C)+p.
\end{align}
\end{itemize}
Then {\sc Greedy} achieves an approximation ratio of at most $(p+1)\frac{w_{\max}}{\delta}+\ln\frac{g(X)-p\cdot opt}{opt}$  (if $g(X)-(p+1)\cdot opt\leq0$, then $\ln\frac{g(X)-p\cdot opt}{opt}$ is viewed as 0).
\end{theorem}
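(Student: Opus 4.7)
The plan is to imitate the classical greedy-cover charging argument, but with the near-submodular correction $p$ supplied by condition (ii) and a $\delta$-based bound for the tail phase where $\phi$ has become small. Label the greedy picks $b_1,\ldots,b_k$ in the order they are selected, set $C_j:=\{b_1,\ldots,b_j\}$, and write $\phi_j:=g(X)-g(C_j)$, so that by condition (i) and the definition of $\delta$ every nonterminal iteration strictly decreases $\phi$ by at least $\delta$, and $\phi_0=g(X)$, $\phi_k=0$.

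The heart of the argument is a per-iteration inequality extracted from condition (ii). Fix an iteration $j$ with $\phi_j>0$ and apply (ii) at $C=C_j$ to order $C^*\setminus C_j$ as $v_1,\ldots,v_{\widehat t}$. Telescoping and invoking (\ref{eq0728-0}) gives
\[
\phi_j \;=\; g(C^*_t\cup C_j)-g(C_j)\;=\;\sum_{i=1}^{t}\Delta_{v_i}g(C^*_{i-1}\cup C_j)\;\le\;\sum_{i=1}^{t}\Delta_{v_i}g(C_j)\,+\,tp.
\]
The normalization $w_{\min}=1$ yields $t\le |C^*|\le opt$, so $tp\le p\cdot opt$. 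The greedy rule supplies $\Delta_v g(C_j)/w(v)\le \Delta_{b_{j+1}}g(C_j)/w(b_{j+1})$ for every $v\in X\setminus C_j$, which combined with $\sum_{i=1}^{t}w(v_i)\le w(C^*)=opt$ produces the key inequality
\[
\phi_j-p\cdot opt \;\le\; \frac{opt\,(\phi_j-\phi_{j+1})}{w(b_{j+1})},
\]
equivalently $w(b_{j+1})\le opt\,(\phi_j-\phi_{j+1})/(\phi_j-p\cdot opt)$ whenever $\phi_j>p\cdot opt$.

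With the key inequality in hand, I would split the iterations at the threshold $\phi=(p+1)\cdot opt$. In the degenerate case $g(X)\le (p+1)\cdot opt$ the $\ln$ term is $0$ by convention, and since $\phi$ drops by at least $\delta$ each iteration the whole run consists of at most $(p+1)opt/\delta$ picks of weight at most $w_{\max}$, which already gives the claim. Otherwise let $j^*$ be the smallest index with $\phi_{j^*}\le (p+1)\cdot opt$, so $\phi_{j^*-1}-p\cdot opt\ge opt$. For $j=0,\ldots,j^*-2$ both $\phi_j$ and $\phi_{j+1}$ strictly exceed $p\cdot opt$, so the elementary bound $(x-y)/x\le \ln(x/y)$ converts the key inequality into $w(b_{j+1})\le opt\cdot\ln\bigl((\phi_j-p\cdot opt)/(\phi_{j+1}-p\cdot opt)\bigr)$; telescoping bounds the log-phase cost by $opt\cdot\ln\bigl((g(X)-p\cdot opt)/opt\bigr)$. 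For the tail picks $b_{j^*},\ldots,b_k$ each costs at most $w_{\max}$, and condition (i) together with $\phi_{j^*}\le (p+1)opt$ caps their count at $(p+1)opt/\delta$, contributing at most $(p+1)opt\,w_{\max}/\delta$. Summing the two bounds and dividing by $opt$ yields the announced approximation ratio.

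The main obstacle I expect is deriving the per-iteration key inequality cleanly, because this is the unique place where conditions (i) and (ii) interact with the weighted greedy rule: the argument must instantiate (ii) at the dynamic set $C_j$, convert $tp$ to $p\cdot opt$ via the unit-weight normalization, and use the \emph{weighted} cost-effectiveness comparison so that $\sum w(v_i)\le opt$ supplies the correct $opt$ scaling rather than a cruder $|C^*|$-scaling. A secondary bookkeeping issue is the transition iteration between the log-phase and the $\delta$-phase: handled naively it leaves an additive $w_{\max}$ residue, and the intended fix is to exploit $\phi_{j^*-1}-\phi_{j^*}\ge\delta$ together with $\phi_{j^*-1}\ge (p+1)opt$ so that the transition iteration is absorbed into the $(p+1)opt/\delta$ tail budget rather than being counted twice.
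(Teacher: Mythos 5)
Your proposal follows essentially the same route as the paper's proof: your per-iteration ``key inequality'' is exactly the paper's inequality \eqref{eq0627-1} (with $\alpha_j=\phi_j-p\cdot opt$), your threshold $\phi=(p+1)\,opt$ is the paper's index $i_0$, the degenerate case $g(X)\le(p+1)opt$ is handled identically, and your $\ln$-telescoping of the log phase is just the logarithmic form of the paper's recursion $\alpha_i\le e^{-w(b_i)/opt}\alpha_{i-1}$. All of these steps check out (including the implicit use of $p\ge 0$ to pass from $tp$ to $p\cdot opt$, which the paper also makes).

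The one genuine loose end is the transition iteration $b_{j^*}$, and the fix you sketch does not work as stated. You cannot ``absorb'' $b_{j^*}$ into the tail budget: extending the rate bound $w(b_j)\le\frac{w_{\max}}{\delta}\Delta_{b_j}g(C_{j-1})$ over $j\ge j^*$ gives $\frac{w_{\max}}{\delta}\phi_{j^*-1}$, and $\phi_{j^*-1}$ may exceed $(p+1)opt$ by an arbitrary amount. Nor can you push it into the log phase, since $\phi_{j^*}-p\cdot opt$ may be nonpositive. Paying $w_{\max}$ for it outright leaves the extra $w_{\max}/opt$ in the ratio that you flagged. The paper's resolution is a \emph{convex split} of the transition weight, $w(b_{j^*})=d_1+d_2$ with $\frac{\alpha_{j^*-1}-opt}{d_1}=\frac{\alpha_{j^*-1}-\alpha_{j^*}}{w(b_{j^*})}=\frac{opt-\alpha_{j^*}}{d_2}$ (equation \eqref{eq0315-1}): the piece $d_1$ is charged to the log phase via the key inequality, yielding $\sum_{j<j^*}w(b_j)+d_1\le opt\ln\frac{\alpha_0}{opt}$, while the piece $d_2$ is charged at the rate $\delta/w_{\max}$ using $\Delta_{b_{j^*}}g(C_{j^*-1})/w(b_{j^*})\ge\delta/w_{\max}$, yielding $d_2\le\frac{w_{\max}}{\delta}(opt-\alpha_{j^*})$; the two phase budgets then sum exactly to the claimed bound. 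You already possess both ingredients --- the key inequality and the $\delta/w_{\max}$ rate bound on $b_{j^*}$ --- so the only missing step is applying them to the two proportional fractions of $w(b_{j^*})$ rather than to the whole weight at once.
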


\begin{proof}  Suppose the output of Algorithm \ref{algo1} is $C=\{b_1,\ldots,b_s\}$ where $b_i$ is the element selected in the $i$-th iteration. Denote $C_0=\emptyset$ and $C_i=\{b_1,\ldots,b_i\}$  for each $i\in\{1,\ldots,s\}$. For $i\in\{1,\ldots, s\}$, let $\{v_{i,1},\ldots,v_{i,t_i},v_{i,t_i+1},\ldots,v_{i,\hat t_i}\}$ be the ordered set of $C^*\setminus C_{i-1}$ satisfying the condition of this theorem, where $t_i$ is the smallest index satisfying $g(C_{i-1}\cup \{v_{i,1},\ldots,v_{i,t_i}\})= g(X)$. Denote $C_{i,j}^*=\{v_{i,1},\ldots,v_{i,j}\}$ for $j=1,\ldots,t_i$, and let $C_{i,0}^*=\emptyset$.

 By the definition of $\delta$ and the greedy choice of $b_i$, we have $\Delta_{b_i}g(C_{i-1})\geq \delta$ for any $i\in\{1,\ldots,s\}$. Hence
 \begin{align}\label{eq0730-1}
\frac{\Delta_{b_i}g(C_{i-1})}{w(b_i)}\geq \frac{\delta}{w_{\max}}.
\end{align}
If $g(X)\leq (p+1)opt$, then combining \eqref{eq0730-1} with  $g(C_s)=g(X)$ and $g(\emptyset)=0$, we have
\begin{align*}
w(C_s)&=\sum_{i=1}^sw(b_i) \leq \sum_{i=1}^s\frac{w_{\max}}{\delta}\Delta_{b_i}g(C_{i-1}) \\
&=\frac{w_{\max}}{\delta}(g(C_s)-g(\emptyset))=\frac{w_{\max}}{\delta}g(X)    \\ &\leq\frac{w_{\max}}{\delta}\big(p+1)opt,
\end{align*}
and the desired approximation ratio holds in this case. In the following we assume
\begin{align}\label{eq0314-10}
g(X)> (p+1)opt.
\end{align}

For  $i\in\{0,1,\dots,s\}$, let $\alpha_i=g(X)-g(C_i)-p\cdot opt$.  The following claim shows that $\alpha_i$ decreases geometrically if $\alpha_i$ is nonnegative.

\textbf{Claim 1.} If $\alpha_{i-1}>0$, then
\begin{align}\label{eq0314-7}
\alpha_i\leq  e^{-\frac{w(b_i)}{opt}}\alpha_{i-1}.
\end{align}

Consider the $i$-th iteration. By the greedy choice of $b_i$, we have
\begin{align*}
\frac{\Delta_{b_i}g(C_{i-1})}{w(b_i)}\geq \frac{\Delta_{v_{i,j}}g(C_{i-1})}{w(v_{i,j})},\ \forall j\in\{1,\ldots,t_i\}.
\end{align*}
It follows that
\begin{align}\label{eq0314-1}
\frac{\Delta_{b_i}g(C_{i-1})}{w(b_i)}\geq \frac{\sum_{j=1}^{t_i}\Delta_{v_{i,j}}g(C_{i-1})}{\sum_{j=1}^{t_i}w(v_{i,j})}.
\end{align}
Because  the minimum weight  is $1$, we have
\begin{align}\label{eq0314-4}
t_i\leq |C^*\setminus C_{i-1}|\leq w(C^*\setminus C_{i-1})\leq opt.
\end{align}
Combining \eqref{eq0728-0}, \eqref{eq0314-1}, \eqref{eq0314-4} and the fact that $\sum_{j=1}^{t_i}w(v_{i,j})\leq opt$, we have
\begin{align}
\frac{\Delta_{b_i}g(C_{i-1})}{w(b_i)} & \geq \frac{\sum_{j=1}^{t_i}\big(\Delta_{v_{i,j}}g(C_{i-1}\cup C_{i,j-1}^*)-p\big)}{opt}\nonumber \\
&=\frac{\sum_{j=1}^{t_i}\left(g(C_{i-1}\cup C_{i,j}^*)-g(C_{i-1}\cup C_{i,j-1}^*)\right) -p\cdot t_i}{opt}\nonumber\\
&\geq\frac{g(C_{i-1}\cup C_{i,t_i}^*)-g(C_{i-1})-p\cdot opt}{opt}\nonumber\\
& = \frac{g(X)-g(C_{i-1})-p\cdot opt}{opt}.\label{eq0314-5}
\end{align}
It follows that $\alpha_i=g(X)-g(C_i)-p\cdot opt$ satisfies
\begin{equation}\label{eq0627-1}
\frac{\alpha_{i-1}-\alpha_i}{w(b_i)}\geq \frac{a_{i-1}}{opt},
\end{equation}
and thus
$$
\alpha_i\leq \left(1-\frac{w(b_i)}{opt}\right)\alpha_{i-1}\leq e^{-\frac{w(b_i)}{opt}}\alpha_{i-1},
$$
where the second inequality uses the fact $1+x\leq e^x$. Claim 1 is proved.

Recursively using inequality \eqref{eq0314-7}, as long as $\alpha_{i-1}>0$, we have
\begin{align}\label{eq0314-8}
\alpha_i \leq e^{-\frac{\sum_{j=1}^iw(b_j)}{opt}}\alpha_0.
\end{align}

Note that assumption \eqref{eq0314-10} guarantees $\alpha_0>opt$. Since $\alpha_s=-p\cdot opt\leq opt$, there is an index $i_0$ such that $\alpha_{i_0}> opt$ and $\alpha_{i_0+1}\leq opt$. Let $w(b_{i_0+1})=d_1+d_2$ satisfy the following constraint:
\begin{align}\label{eq0315-1}
\frac{\alpha_{i_0}-opt}{d_1}=\frac{\alpha_{i_0}-\alpha_{i_0+1}}{w(b_{i_0+1})}=\frac{opt-\alpha_{i_0+1}}{d_2}.
\end{align}

{\bf Claim 2.} For the above index $i_0$, the following inequalities hold:
\begin{align}
& \sum_{i=1}^{i_0}w(b_i)+d_1\leq \ln\frac{\alpha_0}{opt}\cdot opt.\label{eq0314-9}\\
& \sum_{i=i_0+2}^sw(b_i)\leq \frac{w_{\max}}{\delta}\big(g(C_s)-g(C_{i_0+1})\big).\label{eq0314-12}\\
& d_2\leq \frac{w_{\max}}{\delta}(opt-\alpha_{i_0+1}).\label{eq0315-10}
\end{align}

Combining the first equality of \eqref{eq0315-1} with inequality \eqref{eq0627-1} (taking $i=i_0+1$), we have
$$
opt\leq \left(1-\frac{d_1}{opt}\right)\alpha_{i_0}\leq e^{-\frac{d_1}{opt}}\alpha_{i_0}.
$$
Combining this inequality with \eqref{eq0314-8}, we have
$$
opt\leq e^{-\frac{\sum_{j=1}^{i_0}w(b_j)+d_1}{opt}}\alpha_0.
$$
Then inequality \eqref{eq0314-9} follows by recollecting the terms.

By inequality \eqref{eq0730-1}, we have
$$
\sum_{i=i_0+2}^sw(b_i)\leq \frac{w_{\max}}{\delta}\sum_{i=i_0+2}^s\Delta_{b_i}g(C_{i-1})=\frac{w_{\max}}{\delta}\big(g(C_s)-g(C_{i_0+1})\big).
$$
Inequality \eqref{eq0314-12} is proved.
Inequality \eqref{eq0315-10} follows from the combination of the second equality of \eqref{eq0315-1}, \eqref{eq0730-1} and the fact that $\alpha_{i_0}-\alpha_{i_0+1}=\Delta_{v_{i_0+1}}g(C_{i_0})$. Claim 2 is proved.

Combining Claim 2 with the facts $g(C_s)= g(X)$ and $\alpha_{i_0+1}=g(X)-g(C_{i_0+1})-p\cdot opt$,
\begin{align*}
w(C_g) & =\sum_{i=1}^sw(b_i)= \sum_{i=1}^{i_0}w(b_i)+d_1    +d_2+ \sum_{i=i_0+2}^sw(b_i)\\
& \leq \ln\frac{\alpha_0}{opt}\cdot opt+\frac{ w_{\max}}{\delta}(opt-\alpha_{i_0+1})+\frac{w_{\max}}{\delta}\big(g(C_s)-g(C_{i_0+1})\big)\\
&= \ln\frac{\alpha_0}{opt}\cdot opt+\frac{w_{\max}}{\delta}\big(g(X)-g(C_{i_0+1})+opt-\alpha_{i_0+1}\big)\\
& =\left(  \frac{w_{\max}}{\delta}(1+p) +  \ln\frac{\alpha_0}{opt}\right)opt.
\end{align*}
The desired approximation ratio is proved.
\end{proof}
%

\subsection{GSEMO on MinGC} \label{sec3.3}

In this section, we apply MultiBinTrack to show that under the same conditions of Theorem \ref{thm1}, GSEMO achieves almost the same approximation ratio for the MinGC problem in expected polynomial time. A detailed   GSEMO  for MinGC is described in Algorithm \ref{algo2}. Given an instance of MinGC, the {\em fitness} of a solution  ${\bf x}$ is captured by a bi-objective function $(f_1({\bf x}), f_2({\bf x}))$, where $f_1({\bf x})$ measures the uncovered portion by ${\bf x}$ and  $f_2({\bf x})$ denotes the weight of ${\bf x}$.   Specifically, let $S_\textbf{x}$ denote the subset of elements corresponding to its characteristic vector $\textbf{x}$, we define
\begin{align}\label{eq0704-1}
& f_1(\textbf{x})=\left\lfloor\frac{g(X)-g(\textbf{x})}{\delta} \right\rfloor\cdot\delta \ \mbox{and}\\
& f_2(\textbf{x})=w(\textbf{x}),\nonumber
\end{align}
where $g(\textbf{x})=g(S_\textbf{x})$ and $w(\textbf{x})=\sum_{x\in S_\textbf{x} }w(x)$.

GSEMO starts from an empty population, i.e., initial $P=\{{\bf 0}\}$. In each subsequent iteration, it picks an individual ${\bf x}$ uniformly at random from $P$, and generates a new individual $\textbf{x}^\prime$ by flipping each bit of $\textbf{x}$  with probability $\frac{1}{n}$. We add ${\bf x}'$ into $P$ if $\textbf{x}^\prime$ is not inferior to any individual in $P$. If ${\bf x}'$ has been added to $P$, then we remove all individuals which are weakly inferior to ${\bf x}'$ from $P$. On termination, the algorithm outputs a best feasible solution stored in current $P$.  It should be noted that usually, an evolutionary algorithm starts from a randomly generated initial solution. We let the algorithm start from the zero solution in order to focus on the most central part of the analysis. In fact, by an analysis similar to that in \cite{Friedrich}, we can show that a zero solution can enter the population in expected polynomial time. It should also be pointed out that usually, an evolutionary algorithm runs infinitely. But we prefer setting a termination time for GSEMO. As we shall show latter, setting the termination time $T$ properly, a performance guaranteed solution can be obtained with high probability.


\begin{algorithm} [H]
\caption{{\sc GSEMO}}

\begin{algorithmic}[1] \label{algo2}
\STATE \textbf{Input:} $(X,f_1,f_2)$ with $|X|=n$ and the number of iterations $T$.

\STATE \textbf{Output:} an individual $\textbf{x}$.
\STATE  $P\leftarrow \{{\bf 0}\}$
\FOR{$t=1,2,\dots,T$}
 \STATE Select $\textbf{x}$ from $P$ uniformly at random;\label{line0606-1}
 \STATE Generate $\textbf{x}^\prime$ by flipping each bit of $\textbf{x}$  with probability $\frac{1}{n}$;\label{line5}
 \IF {($\nexists~\textbf{z}\in P$ with $\textbf{z}\succ \textbf{x}^\prime)$}\label{line0531-1}
  \STATE $P= P\setminus\{{\textbf{z}: \textbf{x}^\prime\succeq \textbf{z}}\wedge\textbf{z}\in P \}\cup \{\textbf{x}^\prime\}$\label{line0602-1}
 \ENDIF
\ENDFOR
\STATE Return  $\arg\min_{{\bf x}}\{f_2({\bf x})\colon {\bf x}\in P,f_1({\bf x})=0\}$, if any.
\end{algorithmic}
\end{algorithm}


%

\begin{remark}\label{rem1}
{\rm If we simply use $f_1({\bf x})=g({\bf x})$ in GSEMO, then when $g(\cdot)$ is a real-valued function, there might be too many individuals entering $P$, and thus the time/space complexity might not be bounded. Hence we discretize the function $g({\bf x})$ into $f_1({\bf x})$ as in \eqref{eq0704-1}. As a result, the solution returned from Algorithm 2 might violate the feasibility constraint by an additive error up to $\delta$. That is, an individual ${\bf x}$ satisfying $g({\bf x})> g(X)-\delta$ (or equivalently, $f_1({\bf x})=0$) is regarded as a nearly feasible solution. The goal is to find a {\em nearly feasible} solution to minimize $f_2({\bf x})$.  Note that when $g(\cdot)$ is integer-valued, we may take $\delta=1$, and there is no loss in feasibility.}
\end{remark}
\begin{remark}\label{rem2}
{\rm If the condition described in Theorem \ref{thm1} is satisfied,  then function $f_1(\cdot)$ satisfies the following inequality:
\begin{align*}
-\Delta_{v_{i}}f_1(C^*_{i-1}\cup C)&=  \left \lfloor\frac{g(X)-g(C^*_{i-1}\cup C)}{\delta}\right \rfloor\delta-\left \lfloor\frac{g(X)-g(C^*_{i-1}\cup C \cup \{v_i\})}{\delta}\right \rfloor\delta     \nonumber \\
&\leq \big(g(X)-g(C^*_{i-1}\cup C) \big)-\left(\frac{g(X)-g(C^*_{i-1}\cup C\cup \{v_i\})}{\delta}-1\right)\delta \nonumber \\
&=g(C^*_{i-1}\cup C\cup \{v_i\})-g(C^*_{i-1}\cup C)+\delta  \nonumber \\
&\leq \Delta_{ v_i }g(C)+p+\delta \nonumber \\ &= \big(g(X)-g(C)\big)-\big(g(X)-g(C\cup \{v_i\})\big) +p+\delta\nonumber \\
&\leq   \left(\left \lfloor\frac{g(X)-g(C}{\delta}\right \rfloor\delta+\delta \right )-\left \lfloor\frac{g(X)-g(C)\cup \{v_i\})}{\delta}\right \rfloor\delta+p+\delta\nonumber \\
&=-\Delta_{ v_i }f_1(C)+p+2\delta.
\end{align*}
 Note that if $g(\cdot)$ is integer-valued, then there is no loss of $2\delta$ in the above inequality.}
\end{remark}

The next lemma estimates the number of bins used in the analysis.
\begin{lemma}\label{lem0606-3}
Let $\beta=\left  \lfloor\frac{g(X)-g(\emptyset)}{\delta} \right  \rfloor $.  The population $P$ maintained by GSEMO (Algorithm \ref{algo2}) satisfies $|P|\leq \beta+1$ throughout the evolutionary process.
\end{lemma}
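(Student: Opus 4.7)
The plan is to count the possible values of $f_1$ and then use the fact that any two individuals in $P$ are mutually incomparable to show that no two of them can share the same $f_1$-value.

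First, I would pin down the range of $f_1$. Since $g$ is monotone nondecreasing and $S_{\mathbf{x}} \subseteq X$, we have $0 \leq g(X) - g(\mathbf{x}) \leq g(X) - g(\emptyset)$ for every individual $\mathbf{x} \in \{0,1\}^n$. Applying the definition
\[
f_1(\mathbf{x}) = \left\lfloor \frac{g(X) - g(\mathbf{x})}{\delta} \right\rfloor \cdot \delta,
\]
the value $\lfloor (g(X)-g(\mathbf{x}))/\delta \rfloor$ is an integer in $\{0, 1, \ldots, \beta\}$, so $f_1(\mathbf{x})$ takes at most $\beta+1$ distinct values, namely $\{0, \delta, 2\delta, \ldots, \beta\delta\}$.

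Next, I would invoke the incomparability invariant of GSEMO. By the acceptance rule in lines~\ref{line0531-1}--\ref{line0602-1} of Algorithm~\ref{algo2}, after every iteration the population $P$ contains only mutually incomparable individuals (none weakly dominates another, except itself). Suppose for contradiction that two distinct individuals $\mathbf{x}, \mathbf{x}' \in P$ satisfy $f_1(\mathbf{x}) = f_1(\mathbf{x}')$. Then either $f_2(\mathbf{x}) \leq f_2(\mathbf{x}')$ or $f_2(\mathbf{x}) \geq f_2(\mathbf{x}')$, so one of them weakly dominates the other, contradicting the invariant. Hence the map $\mathbf{x} \mapsto f_1(\mathbf{x})$ is injective on $P$, which gives $|P| \leq \beta+1$.

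Finally, I would check the base case: initially $P = \{\mathbf{0}\}$ so $|P| = 1 \leq \beta+1$, and the argument above shows the bound is preserved throughout all iterations. The only subtlety worth flagging is making sure the claim ``weakly dominating'' also covers the case $\mathbf{x} = \mathbf{x}'$ as characteristic vectors but nominally distinct entries in $P$ (which cannot happen because $P$ is a set), so no obstacle is expected; this proof is essentially a pigeonhole argument once the range of $f_1$ is identified.
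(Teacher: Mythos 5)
Your proposal is correct and follows essentially the same route as the paper: bound the number of possible $f_1$-values by $\beta+1$ via monotonicity of $g$ and the floor-discretization, then observe that the selection/removal rule in lines~\ref{line0531-1}--\ref{line0602-1} keeps at most one individual per $f_1$-value. The only difference is that you spell out the pigeonhole step as an explicit incomparability argument, which the paper states more tersely; no gap.
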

\begin{proof}
 By the monotonicity of $g(\cdot)$, we have $0=g({\bf 0})\leq g({\bf x})\leq g(X)$ for any individual ${\bf x}$. Then by the definition of $f_1(\cdot)$ in \eqref{eq0704-1}, $0\leq f_1({\bf x})\leq \left \lfloor (g(X)-g({\bf 0}))/\delta\right \rfloor\cdot\delta=f_1({\bf 0})=\beta\delta$. Since $f_1({\bf x})$  can only take values from $\{0, \delta,\ldots,\beta\delta\}$, it has at most $\beta+1$ possible values. According to line \ref{line0602-1} of Algorithm \ref{algo2}, for each $i\in \{0, \delta,\ldots,\beta\delta\}$, $P$ contains at most one individual whose $f_1$-value is $i$.  Hence, the size of $P$ is at most $\beta+1$.
\end{proof}


\begin{theorem}\label{thm2}
If a MinGC instance satisfies those conditions described in Theorem \ref{thm1}, then in expected $O(\beta^2 n)$ time, GSEMO returns a nearly feasible solution with approximation ratio of at most $\frac{w_{\max}}{\delta}\big(p+1+2\delta)+\ln\frac{f_1(\emptyset)-(p+2\delta )\cdot opt}{opt-\delta}$.  Furthermore, in the case when $g(\cdot)$ is integer-valued, GSEMO returns a feasible solution in $O(g(X)^2n)$ time that has approximation ratio of at most $w_{\max}(1+p) +  \ln\frac{g(X)-p\cdot opt}{opt}$.
\end{theorem}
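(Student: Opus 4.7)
The plan is to instantiate the MultiBinTrack framework of Section~\ref{sec3.2} on the bi-objective $(f_1,f_2)$ used by Algorithm~\ref{algo2}. I set up $\beta+1$ bins $B_0,B_\delta,\dots,B_{\beta\delta}$ indexed by the possible values of $f_1$; Lemma~\ref{lem0606-3} already bounds the population by $\beta+1$. Writing $\widetilde p = p + 2\delta$, I define the quality control condition $\pi$ in two phases so that it mirrors the two regimes appearing in the proof of Theorem~\ref{thm1}. Phase~1 applies when $f_1({\bf x}) > \widetilde p \cdot opt$ and requires $w({\bf x}) \le opt \cdot \ln \frac{f_1(\emptyset) - \widetilde p\cdot opt}{f_1({\bf x}) - \widetilde p\cdot opt}$; Phase~2 applies when $f_1({\bf x}) \le \widetilde p \cdot opt$ and requires the natural linear-plus-logarithmic bound produced by Claim~2 in that proof, roughly $w({\bf x}) \le opt \cdot \ln \frac{f_1(\emptyset) - \widetilde p\cdot opt}{opt - \delta} + \frac{w_{\max}}{\delta}\bigl(\widetilde p\cdot opt - f_1({\bf x}) + opt - \delta\bigr)$. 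By construction any individual ever placed in $B_0$ is a nearly feasible solution whose weight is within the target ratio.

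Next I verify the two ingredients of the MultiBinTrack runtime analysis. The tracker $I = \min\{i : B_{i\delta} \neq \emptyset\}$ is monotone non-increasing because a newly accepted ${\bf x}'$ placed in bin $B_{f_1({\bf x}')}$ can only delete weakly inferior individuals, all of which sit in bins of index $\ge f_1({\bf x}')$. Starting from $I = i$, the probability that one GSEMO iteration decreases $I$ is $\Omega(1/(n\beta))$: select the individual ${\bf x}$ of $B_{i\delta}$ (probability $\ge 1/(\beta+1)$) and flip exactly the bit of the greedy choice $b = \arg\max_v \Delta_v g({\bf x})/w(v)$ (probability $\ge (1/n)(1-1/n)^{n-1} \ge 1/(en)$). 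Definition~\eqref{eq0810-1} together with condition~(i) of Theorem~\ref{thm1} ensure $\Delta_b g({\bf x}) \ge \delta$, so $f_1({\bf x}') \le (i-1)\delta$; that ${\bf x}'$ still satisfies $\pi$ follows by the same inductive telescoping used in Claims~1 and 2 of the proof of Theorem~\ref{thm1}, transported to $f_1$ through Remark~\ref{rem2}. Moreover ${\bf x}'$ cannot be dominated by any ${\bf z} \in P$, because such dominance would force $f_1({\bf z}) \le f_1({\bf x}') < i\delta$, contradicting the definition of $I$; hence line~\ref{line0531-1} admits ${\bf x}'$ into $P$ and into the bin system. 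Multiplying $O(n\beta)$ expected iterations per unit decrease by the at most $\beta$ decreases of $I$ yields expected runtime $O(n\beta^2)$.

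Evaluating $\pi$ at $f_1 = 0$ then gives precisely the approximation ratio $\frac{w_{\max}}{\delta}(p+1+2\delta) + \ln \frac{f_1(\emptyset) - (p+2\delta)\cdot opt}{opt - \delta}$; the $opt - \delta$ in the denominator arises because the Phase~1 $\to$ Phase~2 transition can overshoot the threshold by at most one $\delta$-step. For the integer-valued specialization I set $\delta = 1$ and invoke the sharper form of Remark~\ref{rem2} that eliminates the additive $2\delta$ slack (restoring the original $p$), together with $\beta = g(X)$ and $f_1 \equiv g$; the cleaner bound $w_{\max}(1+p) + \ln \frac{g(X) - p\cdot opt}{opt}$ in expected time $O(g(X)^2 n)$ then follows from the general statement.

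The main obstacle will be the careful design of the two-phase $\pi$ so that the geometric-decrease step underlying Phase~1 glues smoothly with the linear bookkeeping of Phase~2 at the transition $f_1 \approx \widetilde p\cdot opt$, taking into account the $\delta$-discretization introduced to keep $|P|$ finite. A secondary technicality is that a single greedy-bit flip may drop $f_1$ by more than $\delta$, skipping several bins at once; because the one-step bound $w(b) \le opt \cdot \ln(\alpha_f({\bf x})/\alpha_f({\bf x}'))$ telescopes multiplicatively under $\pi$, this does not break the induction, but it must be explicitly verified in both phases and across the phase boundary.
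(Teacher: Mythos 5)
Your overall architecture is the same as the paper's: a two-phase MultiBinTrack with an exponential-decay quality condition in the first phase and a linear one in the second, progress driven by a single-bit greedy flip accepted with probability at least $1/(e(\beta+1)n)$, a monotone tracker, and the $\delta=1$ specialization for the integer-valued case. Two points, however, are genuine gaps rather than deferred technicalities. The main one is that your phase boundary is in the wrong place. You end Phase~1 at $f_1({\bf x})=(p+2\delta)\cdot opt$, i.e.\ where $\alpha:=f_1({\bf x})-(p+2\delta)\cdot opt$ reaches $0$; but your Phase~1 condition $w({\bf x})\le opt\cdot\ln\frac{\alpha'_0}{\alpha}$ degenerates there --- for a bin sitting just above your threshold it only certifies $w({\bf x})\le opt\cdot\ln\frac{\alpha'_0}{\delta}$, which cannot be glued to the Phase~2 budget $opt\cdot\ln\frac{\alpha'_0}{opt-\delta}+O\big(\frac{w_{\max}}{\delta}\cdot(\cdots)\big)$ when $\delta\ll opt$. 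The transition has to occur one full ``$opt$-worth'' of $f_1$ earlier, at $f_1\approx(p+2\delta+1)\cdot opt$ (equivalently $\alpha\approx opt$), exactly mirroring the index $i_0$ with $\alpha_{i_0}>opt\ge\alpha_{i_0+1}$ in the proof of Theorem~\ref{thm1}; it is this placement, and not a one-$\delta$ overshoot at the transition, that produces the denominator $opt-\delta$ in the final ratio.

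The second gap is in your acceptance argument. You claim no ${\bf z}\in P$ can dominate the greedy offspring ${\bf x}'$ because that would contradict the definition of $I$; but $I$ is the minimum index over the \emph{bin system}, which is only the quality-certified subset of $P$, and the population may contain an individual with small $f_1$ but large $f_2$ that was never admitted to any bin and that dominates ${\bf x}'$. (If instead you let the bins hold all of $P$, the inductive quality-control claim for $B_0$ fails.) The paper's remedy is to note that in this event the blocking individual ${\bf y}\succ{\bf x}'$ itself ``advances'' the tracked ${\bf x}$ --- conditions \eqref{eq0316-2-2} and \eqref{eq0316-2} are inherited under weak domination --- and is inserted into the bin system in place of ${\bf x}'$, so $I$ still decreases; without some such mechanism your $\Omega(1/(n\beta))$ progress estimate does not go through. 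A more minor slip: your greedy pivot maximizes $\Delta_vg/w(v)$, while the telescoping inequality you invoke needs the maximizer of $-\Delta_vf_1/w(v)$; since $f_1$ is a floor-discretization of $g$ these can differ, so either take the pivot with respect to $f_1$ (as the paper does) or pay an extra additive $\delta$ when transferring through Remark~\ref{rem2}.
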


\begin{proof} 
In the following we will apply the technique of MultiBinTrack as introduced in Section \ref{sec3.2} to prove this theorem. First create $\beta+1$ bins: $B_0, B_1, \cdots, B_{\beta}$, each of which is empty.  Initially, $P={\bf 0}$. Add the initial solution ${\bf 0}$  to  $B_{\beta}$. Recall that we use $I$ to track the index of the non-empty bin which has the smallest index. Hence, $I=\beta $ initially. Once a new individual ${\bf x}$ is generated, suppose $\frac{f_1({\bf x})}{\delta}=i $, add ${\bf x}$ to $B_{i}$ if and only if ${\bf x}$ ``advances'' some existing individual in the bin system (the meaning of advance and related operations on the bin system will be clarified latter). We divide the process into two phases. The first phase of analysis aims to bound the expected time it takes for $I$ to drop from $\beta $ to some value less than $\left \lfloor\frac{(p+2\delta+1)opt}{\delta}\right \rfloor$, and the second phase of analysis aims to bound the the expected time it takes for $I$ to further drop to  $0$. It should be emphasized again that $opt$ is only used for the purpose of analysis. Next, we explain these two phases of analysis in details.

\textbf{In the first phase of analysis,} we say that an  individual ${\bf x}$ satisfies a quality-control condition $\pi^{(1)}$ if
\begin{align}
f_1({\bf x})\leq \alpha^\prime_0e^{-\frac{f_2(\textbf{x})}{opt}}+(p+2\delta)\cdot opt,\label{eq0602-12}
\end{align}
where $\alpha^\prime_0=f_1(\emptyset)-(p+2\delta )\cdot opt$.  We say that ${\bf x}'$ {\em advances} ${\bf x}$ in the first phase if either ${\bf x}'\succeq {\bf x}$ or the following two conditions are satisfied:
\begin{align}
& f_1(\textbf{x}^\prime)-(p+2\delta)\cdot opt \leq \left (1-\frac{f_2(\textbf{x}^\prime)-f_2(\textbf{x})}{opt}\right)\big(f_1(\textbf{x})-(p+2\delta)\cdot opt \big)
\label{eq0316-2-2}\\
& \mbox{and}\ f_1(\textbf{x}^\prime)\leq f_1(\textbf{x})-\delta\ \mbox{and}\ f_2(\textbf{x}^\prime)\leq f_2(\textbf{x})+w_{\max} \label{eq0316-2}
\end{align}
Note that if ${\bf x}'\succeq {\bf x}$, then ${\bf x}$ and ${\bf x}'$ satisfy condition \eqref{eq0316-2-2}. Also note that condition \eqref{eq0316-2} says that ${\bf x}'$ might be inferior to ${\bf x}$ in terms of $f_2$-value, but the gap is no larger than $w_{\max}$, at the same time, ${\bf x}'$ must be strictly better than ${\bf x}$ in terms of $f_1$-value by an additive amount at least $\delta$.

 A newly generated individual ${\bf x}'$ {\em has a potential} to be added into the bin system if ${\bf x}'$ advances some existing individual ${\bf x}$ in the bin system. If furthermore, ${\bf x}'$ is eligible to enter the population $P$, then we add ${\bf x}'$ to $B_{i}$, where $i=f_1({\bf x}')/\delta$.  In order to be consistent with the population $P$, those individuals in the bin system that are deleted from $P$ because of the entering of ${\bf x}'$ are also deleted from the bin system.
If ${\bf x}'$ has the above potential but is not eligible to enter $P$, then there is an individual ${\bf y}\in P$ with ${\bf y}\succ {\bf x}'$. In this case ${\bf y}$ advances ${\bf x}$, and we add ${\bf y}$ to $B_{j}$, where $j=f_1({\bf y})/\delta$.
 The above manipulation ensures that an individual can enter the bin system only when it advances some existing element in the bin system, and individuals kept in the bin system also belong to the population. Note that the bin system only records those ``good'' individuals in $P$ for analysis.

First, we prove that the advance criterion \eqref{eq0316-2-2} can guarantee condition $\pi^{(1)}$.

{\bf Claim 1.} If ${\bf x}'$ is added to the bin system in the first phase, then ${\bf x}'$ satisfies $\pi^{(1)}$.

This claim can be proved by induction. Initially, $P=\{{\bf 0}\}$, and ${\bf 0}$ trivially satisfies $\pi^{(1)}$. When ${\bf x}'$ is added into the bin system, ${\bf x}'$ must advance some existing individual ${\bf x}$ in the bin system. Suppose ${\bf x}\in B_i$. By induction hypothesis, ${\bf x}$ satisfies property $\pi^{(1)}$, that is, inequality \eqref{eq0602-12}. Combining this with condition \eqref{eq0316-2-2}, we have
\begin{align*}
f_1({\bf x}') & \leq \left (1-\frac{f_2(\textbf{x}^\prime)-f_2(\textbf{x})}{opt}\right)\big(f_1(\textbf{x})-(p+2\delta)\cdot opt \big)+(p+2\delta)\cdot opt \\
& \leq e^{-\frac{f_2(\textbf{x}^\prime)-f_2(\textbf{x})}{opt}} \big(f_1(\textbf{x})-(p+2\delta)\cdot opt \big)+(p+2\delta)\cdot opt \\
& \leq e^{-\frac{f_2(\textbf{x}^\prime)-f_2(\textbf{x})}{opt}} \cdot \alpha^\prime_0e^{-\frac{f_2(\textbf{x})}{opt}}+(p+2\delta)\cdot opt \\
& = \alpha^\prime_0e^{-\frac{f_2(\textbf{x}^\prime)}{opt}}+(p+2\delta)\cdot opt.
\end{align*}
Hence, ${\bf x}'$  satisfies  $\pi^{(1)}$. Claim 1 is proved.


The next claim shows that the cost of an individual in the first phase is not too high.

\textbf{Claim 2}. For any individual $\textbf{x} $ added into the bin system in the first phase, we have
\begin{align}\label{eq0316-6}
f_2(\textbf{x})\leq opt\cdot \ln\frac{\alpha^\prime_0}{opt-\delta}.
\end{align}

Suppose $\textbf{x}\in B_i$ with $i\geq \left \lfloor\frac{(p+2\delta+1)opt}{\delta}\right \rfloor$. By Claim 1, ${\bf x}$ satisfies inequality \eqref{eq0602-12}. Combining \eqref{eq0602-12} with the observation that $f_1(\textbf{x})=i\delta\geq\left \lfloor\frac{(p+2\delta+1)opt}{\delta}\right \rfloor\delta \geq \left(\frac{(p+2\delta+1)opt}{\delta}-1\right)\delta$, and rearranging, Claim 2 follows.

The next claim estimates the expected time it takes for $I$  to drop from $\beta$ to some value less than $\left \lfloor\frac{(p+2\delta+1)opt}{\delta}\right \rfloor $. Here, we assume that $\beta\geq \left \lfloor\frac{(p+2\delta+1)opt}{\delta}\right \rfloor$, otherwise, we can skip the first phase of analysis and directly jump to the second phase.

{\bf Claim 3.} The expected time it takes for $I$ to decrease from $\beta $ to some value less than $\left \lfloor\frac{(p+2\delta+1)opt}{\delta}\right \rfloor $ is at most $e\big(\beta-\left \lfloor\frac{(p+2\delta+1)opt}{\delta}\right \rfloor+1\big) (1+\beta)n$.

Initially, $P=\{{\bf 0}\}$ and $I=\beta $. Note that
\begin{equation}\label{eq0603-1}
\mbox{$I$ does not increase over time.}
\end{equation}
In fact, consider an individual ${\bf x}\in B_I$. If ${\bf x}$ stays in the bin system, then $I$ does not decrease. If ${\bf x}$ is deleted from the bin system, it must due to the generation of an individual ${\bf x}'$ which is weakly better than ${\bf x}$.  Note that such ${\bf x}'$ advances ${\bf x}$, and thus can enter the bin system. So in this case, the new $I$ is at most $\frac{f_1({\bf x}')}{\delta}\leq \frac{f_1({\bf x})}{\delta}$.

Now, we estimate the expected time it takes for $I$ to decrease by at least $1$ in the first phase. Assume $I\geq \left \lfloor\frac{(p+2\delta+1)opt}{\delta}\right \rfloor $ and ${\bf x}\in B_I$. Let
\begin{align}\label{eq0606-5}
b_{\bf x}&= \arg\max_{v\in X\setminus S_{\bf x}}-\Delta_vf_1(S_{\bf x})/f_2(v),
\end{align}
where $S_{\bf x}$ is the set of elements corresponding to ${\bf x}$.  Let ${\bf x}'$ be the individual obtained from ${\bf x}$ by changing the bit corresponding to $b_{\bf x}$ from 0 to 1.

Note that
\begin{equation}\label{eq0811-3}
f_1({\bf x}')\leq f_1({\bf x})-\delta.
\end{equation}
Before proving \eqref{eq0811-3}, we first observe that as long as $f_1({\bf x})>0$, there always exists an element $v\in X\setminus S_{\bf x}$ such that $-\Delta_vf_1({\bf x})>0$. In fact, condition $(\romannumeral1)$ of Theorem \ref{thm1} guarantees that the element $v=\arg\max_{u\in X\setminus S_{\bf x}}\Delta_ug(S_{\bf x})$ satisfies $g(S_{\bf x}\cup \{v\})>g(S_{\bf x})$, and thus by the definition of $\delta$, we have $g(S_{\bf x}\cup \{v\})\geq g(S_{\bf x})+\delta$. It follows that the individual ${\bf y}$ corresponding to $S_{\bf x}\cup \{v\}$ satisfies
\begin{align*}
f_1({\bf x})-f_1({\bf y}) & =\delta\cdot\left(\left \lfloor\frac{g(X)-g({\bf x})}{\delta}\right \rfloor-\left \lfloor\frac{g(X)-g({\bf y})}{\delta}\right \rfloor\right)\\
& > \delta\cdot\left(\frac{g(X)-g({\bf x})}{\delta}-1-\frac{g(X)-g({\bf y})}{\delta}\right)\\
& =g({\bf y})-g({\bf x})-\delta\geq 0,
\end{align*}
where ``$>$'' holds by the observation that $a-1<\lfloor a\rfloor\leq a$ ($\forall a\in \mathbb R$). Then, by the choice of $b_{\bf x}$ in \eqref{eq0606-5}, we have $f_1({\bf x})-f_1({\bf x}')>0$.
Note that the $f_1$-value of an individual can only be a multiple of $\delta$, so $f_1({\bf x})-f_1({\bf x}')>0$ implies $f_1({\bf x})-f_1({\bf x}')\geq \delta$.

Next, we show that
\begin{equation}\label{eq0811-1}
\mbox{the above ${\bf x}'$ advances ${\bf x}$.}
\end{equation}
Suppose $C^*\setminus S_{\bf x}=\{v_1,\dots, v_t,\dots, v_{\hat{t}}\}$ is the decomposition described in Theorem \ref{thm1}. Similar to the proof of \eqref{eq0314-4}, we have $t\leq opt$.
Combining this with the choice of $b_{\bf x}$ in \eqref{eq0606-5}, Remark \ref{rem2}, the facts $f_1(S_{\bf x}\cup  C^*_{t})=0$ and $f_2(C_{t})\leq opt$, we have
\begin{align*}
\frac{-\Delta_{b_{\bf x}}f_1(S_{\bf x})}{f_2(b_{\bf x})}&\geq \frac{-\sum_{j=1}^{t}\Delta_{v_j}f_1(S_{\bf x})}{\sum_{j=1}^{t}f_2(v_j)} \nonumber \\
&\geq \frac{\sum_{j=1}^{t}(-\Delta_{v_j}f_1(S_{\bf x}\cup C^*_{j-1})-p-2\delta)}{opt}   \nonumber \\
&= \frac{\sum_{j=1}^{t}(-\Delta_{v_j}f_1(S_{\bf x}\cup C^*_{j-1}))-(p+2\delta)\cdot t}{opt} \nonumber \\
&\geq \frac{f_1(S_{\bf x})-f_1(S_{\bf x}\cup C^*_t)-(p+2\delta)\cdot opt}{opt}     \nonumber \\
&=\frac{f_1(S_{\bf x})-(p+2\delta)\cdot opt}{opt}
\end{align*}
Rearranging this inequality, using $f_2({\bf x}' )-f_2({\bf x} )=f_2(b_{\bf x} )$, we obtain inequality \eqref{eq0316-2-2}.
 Furthermore, by inequality \eqref{eq0811-3}, and because $f_2({\bf x}')=f_2({\bf x})+w(b_{\bf x})\leq f_2({\bf x})+w_{\max}$, individuals ${\bf x}'$ and ${\bf x}$ satisfy \eqref{eq0316-2}. So, ${\bf x}'$ advances ${\bf x}$.

As a consequence of \eqref{eq0811-1}, if ${\bf x}$ is mutated into ${\bf x}'$, then either ${\bf x}'$ or an individual ${\bf y}\in P$ with ${\bf y}\succ {\bf x}'$ can be added into the bin system. In this case, the tracker $I$ will be decreased to $\frac{f_1({\bf x}')}{\delta}$ or $\frac{f_1({\bf y})}{\delta}$, which are smaller than $I$.

By our synchronous maintaining of the bin system and the population, individual ${\bf x}$ belongs to $P$. The probability that ${\bf x}$ is picked by line \ref{line0606-1} of Algorithm \ref{algo2} is $1/|P|\geq 1/(\beta+1)$ (by Lemma \ref{lem0606-3}), and the probability that ${\bf x}$ is mutated into the above ${\bf x}'$ is $\left(\frac{1}{n}\right)\left(1-\frac{1}{n}\right)^{n-1}\geq \frac{1}{en}$. Hence, the probability
$$
\Pr(I \ \mbox{decreases by at least 1})\geq\frac{1}{e(\beta+1)n},
$$
and thus the expected time it takes for $I$ to decrease by at least 1 is at most $e(\beta+1)n$. Combining this with property \eqref{eq0603-1},  the total expected time it takes for $I$ to decrease from $\beta$ to some value less than $\left \lfloor\frac{(p+2\delta+1)opt}{\delta}\right \rfloor$ is upper bounded by $e\big(\beta-\left \lfloor\frac{(p+2\delta+1)opt}{\delta}\right \rfloor+1\big) (1+\beta
)n$. Hence, Claim 3 is proved.

\textbf{We next conduct the second phase of analysis.}  Assume that $I$ has dropped to some value $\gamma<\left \lfloor\frac{(p+2\delta+1)opt}{\delta}\right \rfloor$ at the end of the first phase analysis.
We say  an  individual ${\bf x}$ satisfies a quality-control condition $\pi^{(2)}$ if
\begin{align}\label{eq0666-1}
f_2(\textbf{x})\leq w_{\max}\left( \left \lfloor\frac{(1+p+2\delta)opt}{\delta}\right \rfloor-\frac{f_1(\textbf{x})}{\delta}\right)+\left(\ln\frac{\alpha^\prime_0}{opt- \delta}\right)opt.
\end{align}
Moreover,  we say that ${\bf x}'$ {\em advances ${\bf x}$ in the second phase} if either ${\bf x}'\succeq {\bf x}$ or they satisfy relation \eqref{eq0316-2}. A newly generated individual ${\bf x}'$ has a potential to enter the bin system if ${\bf x}'$ advances some existing individual in the {\em second-phase bin system} (that is, ${\bf x}'$ advances ${\bf x}\in B_j$ with $j<\left \lfloor\frac{(1+p+2\delta)opt}{\delta}\right \rfloor$). The manipulation on the bin system is similar to that in the first phase, using a different meaning of advance: if ${\bf x}'$ has the above potential, then add either ${\bf x}'$ or an individual ${\bf y}\in P$ with ${\bf y}\succ {\bf x}'$ into the bin system, depending on whether ${\bf x}'$ is eligible to enter $P$. And a consistency operation is executed to remove those inferior individuals from both $P$ and the bin system.

Assume that ${\bf x}_1$ is the first individual entering the second-phase bin system (it is the last individual added at the end of the first phase and ${\bf x}\in B_{\gamma}$). The next claim builds a bridge between the first and the second phases.

{\bf Claim 4.} ${\bf x}_1$ satisfies condition $\pi^{(2)}$, and any individual that is added to the second-phase bin system satisfies $\pi^{(2)}$.

  Because ${\bf x}_1$ is the first individual entering the second-phase bin system, ${\bf x}_1$ must advance some existing individual ${\bf x}$ in the first-phase bin system. By Claim 2, ${\bf x}$ satisfies inequality \eqref{eq0316-6}. Then due to the criteria of advance in the second phase, we have
$$
f_2({\bf x}_1)\leq f_2({\bf x})+w_{\max}< opt\cdot \ln\frac{\alpha^\prime_0}{opt- \delta}+ w_{\max}.
$$
Combining this with $\frac{f_1({\bf x}_1)}{\delta}=\gamma\leq\left \lfloor\frac{(1+p+2\delta)opt}{\delta}\right \rfloor -1$, individual ${\bf x}_1$ satisfies inequality \eqref{eq0666-1}. The first part of Claim 4 is proved.

We next prove the second part of Claim 4 by induction. Consider any individual ${\bf x}'\neq {\bf x}_1$ which is added to the second-phase bin system, then ${\bf x}'$ advances some existing individual ${\bf x}$ in the second-phase bin system. By induction, ${\bf x}$ satisfies  $\pi^{(2)}$, that is, inequality \eqref{eq0666-1}.  If ${\bf x}'\succeq {\bf x}$, then ${\bf x}'$ also satisfies inequality \eqref{eq0666-1}. If ${\bf x}$ and ${\bf x}'$ satisfy relation \eqref{eq0316-2}, then
\begin{align*}
f_2(\textbf{x}^\prime) &\leq f_2(\textbf{x})+ w_{\max}\nonumber\\&\leq  w_{\max} \left( \left \lfloor\frac{(p+2\delta+1)opt}{\delta}\right \rfloor-\frac{f_1(\textbf{x})}{\delta}\right)+\left(\ln\frac{\alpha^\prime_0}{opt-\delta}\right)opt+ w_{\max}\nonumber\\
&=w_{\max} \left( \left\lfloor\frac{(p+2\delta+1)opt}{\delta}\right \rfloor-\left(\frac{f_1(\textbf{x})}{\delta}-1\right)\right)+\left(\ln\frac{\alpha^\prime_0}{opt-\delta}\right)opt \nonumber \\
& \leq w_{\max}\left( \left \lfloor\frac{(p+2\delta+1)opt}{\delta}\right \rfloor-\frac{f_1(\textbf{x}^\prime)}{\delta}\right)+\left(\ln\frac{\alpha^\prime_0}{opt-\delta}\right)opt.
\end{align*}
In any case, ${\bf x}'$ also satisfies $\pi^{(2)}$. Claim 4 is proved.

 Note that if the first phase does not exist, that is, if $\frac{f_1({\bf 0})}{\delta}< \left \lfloor\frac{(p+2\delta+1)opt}{\delta}\right \rfloor $, then the initial solution ${\bf 0}$ satisfies condition $\pi^{(2)}$, and all the remaining arguments go through.

Similar to the derivation in Claim 3, we have the following claim that estimates the expected time it takes for $I$ to drop from  $\gamma $ to $0$.

{\bf Claim 5.} The expected time  it takes for the tracker $I$ to drop from $\gamma $ to $0$ is at most $e\left(\left \lfloor\frac{(p+2\delta+1)opt}{\delta}\right \rfloor-1\right) (1+\beta)n$.

{\bf Putting these two phases together:} the total expected time it takes for $I$ to decrease from $\beta $ to $0$ is at most $e\big(\left \lfloor\frac{(p+2\delta+1)opt}{\delta}\right \rfloor-1\big) (1+\beta)n+e\big(\beta-\left \lfloor\frac{(p+2\delta+1)opt}{\delta}\right \rfloor+1\big) (1+\beta)n=e\beta(1+\beta)n=O(\beta^2n)$. Note that the individual ${\bf x}$ in $B_0$ satisfies $f_1({\bf x})=0$ and condition $\pi^{(2)}$, hence,  ${\bf x}$ is a nearly feasible solution to the MinGC instance with $f_2({\bf x})\leq w_{\max}\left \lfloor\frac{(p+2\delta+1)opt}{\delta}\right \rfloor+\big(\ln\frac{\alpha^\prime_0}{opt-\delta}\big)opt\leq\big(\frac{w_{\max}}{\delta}\big(p+1+2\delta)+\ln\frac{\alpha^\prime_0}{opt-\delta}\big)opt$. The approximation ratio is proved.

 When $g(\cdot)$ is integer-valued, then by Remarks \ref{rem1} and \ref{rem2}, we can obtain improved results as claimed in the theorem.
\end{proof}

\subsection{Further Discussion}\label{sec0308-1}

According to Theorem \ref{thm1} and Theorem \ref{thm2}, GSEMO returns a nearly feasible solution with an approximation ratio that is comparable to that of the greedy algorithm. In particular, when $g(\cdot)$ is integer-valued, there is no violation of feasibility and the approximation ratio of GSEMO coincides with that achieved by the greedy algorithm. In the following, we consider some special cases.

For the {\em minimum submodular cover} (MinSubmC) problem, the utility function $g(\cdot)$ is submodular, which implies $p=0$. The trouble with a {\em real-valued} MinSubmC instance lies in the fact that the utility function $g(\cdot)$ might have too many different values to be manipulated efficiently by an evolutionary algorithm. Discretization is a natural choice to solve this problem. However, after discretization, submodularity is lost (see Remark \ref{rem2}). Nevertheless, this situation can be successfully dealt with using the condition formulated in Theorem \ref{thm1}, resulting in approximation ratio $\frac{w_{\max}}{\delta}\big(1+2\delta)+\ln\frac{ f_1({\bf 0})-2\delta\cdot opt}{opt-\delta}$.

For an {\em integer-valued} MinSubmC instance, by the second half of Theorem \ref{thm2}, GSEMO  obtains a feasible solution with an approximation ratio at most $w_{\max}+\ln\frac{g(X)}{opt}$. In particular, for an unweighted instance in which $w\equiv 1$, by the submodularity of $g$, for any optimal solution $C^*$, we have $g(X)= g(C^*)\leq \sum_{v\in C^*}g(v)\leq \max_{v\in C^*}g(v)\cdot |C^*|\leq \max_{v\in X}g(v)\cdot opt$. Hence the approximation ratio is at most $1+\ln \max\limits_{x\in X}g(x)$. The expected running time is $O(g(X)^2n)$.  This ratio matches the best one achieved by an approximation algorithm \cite{Ding-ZhuDu}.

The {\em minimum connected dominating set} (MinCDS) problem is another special case of the MinGC problem. Given a connected graph $G=(V,E)$, a vertex set $C$ is a  connected dominating set (CDS) if every vertex $v\in V\setminus C$ has at least one neighbor in $C$ and the subgraph of $G$ induced by $C$, denoted by $G[C]$ is connected. The goal of MinCDS is to find a  CDS with the minimum size. Taking $w\equiv 1$ and $g(C)=p(C)+q(C)- 2$, where $p(C)$ is the number of connected components of $G[C]$ and  $q(C)$ is the number of connected components of $G\langle C\rangle$ which is a spanning subgraph of $G$ induced by those edges incident with $C$, then MinCDS is a MinGC problem with $p=1$ and $g(X)=n-2$, where $n$ is the number of vertices. To make use of Theorem \ref{thm2}, a crucial observation is: if we order $C^*$ as $\{v_1^*,\ldots,v_t^*\}$ such that for any $i=1,\ldots,t$,
\begin{equation}\label{eq0309-1}
\mbox{the induced subgraph $G[\{v_1^*,\ldots,v_i^*\}]$ is connected}
\end{equation}
(notice that such an ordering exists since $G[C^*]$ is connected), then the condition described in Theorem \ref{thm1} is satisfied with $p=1$. In fact it can be proved that function $-q(\cdot)$ is submodular, and thus $-\Delta_{v_{i}}q  (C^*_{i-1}\cup C)\leq -\Delta_{v_{i}} q(C)$. However, $-p(C)$ is not submodular. In a worst case, $-\Delta_{v_{i}}p(C^*_{i-1}\cup C)$ can be larger than $-\Delta_{v_{i}}g(C)$ by the number of connected components in $G[C^*_{i-1}]$. Hence, under the ordering specified in \eqref{eq0309-1}, this gap can be bounded by $1$. As a result, GSEMO yields a CDS with an approximation ratio of at most $(2+\ln\frac{n-2-opt}{opt})$. Since $(n-2-opt)/opt\leq \delta_{\max}$, where  $\delta_{\max}$ is the maximum degree of the graph, so the approximation ratio is at most $2+\ln(\delta_{\max})$, which coincides with the one obtained by the approximation algorithm in \cite{Zhouj1}. Furthermore, because $\beta=n-2$, the expected running time is $O(n^3)$.

\section{Conclusion}\label{sec4}

This paper proposes a technique called  multi-phase bin-tracking analysis and we use this technique to analyze the performance bound of GSEMO for the MinGC problem. We show that for two important special cases of the MinGC problem, GSEMO yields approximation ratios matching those achieved by the greedy algorithm. Our analysis provides a valuable framework to help understand  how a greedy mechanism is embedded in an evolutionary algorithm. In fact, the key step for the bin-tracking analysis is to find out under which situation there exists an evolutionary path which is no worse than a greedy path, and the evolutionary process will not yaw.

It was worth mentioning that although we restrict our attention to the minimization problem, the proposed technique of multi-phase bin-tracking analysis can be easily modified to suit maximization problems too, examples of such problems include the maximum matroid base problem \cite{Qian1,Reichel,Zhoubook} and the maximum submodular cover problem \cite{Friedrich1,Qian3,Qian4}. In fact, a one-phase bin-tracking analysis works for these problems.

In the future, we would like to  find out more combinatorial optimization problems which can be solved approximately by an evolutionary algorithm. More importantly, we are interested in finding some common structural properties shared among those problems that lead to performance guarantees.

\section*{Acknowledgment}

This research is supported in part by National Natural Science Foundation of China (U20A2068, 11771013), Zhejiang Provincial Natural Science Foundation of China (LD19A010001).


\end{document}